\newtheorem{prop}{Proposition}
\newtheorem{case}{Case}
\newtheorem{remark}{Remark}
\newtheorem{lemma}{Lemma}
\title{Adaptive regularization for Lasso models in the context of non-stationary data streams}
\author[1,2]{Ricardo Pio Monti}
\author[2]{Christoforos Anagnostopoulos}
\author[3]{Giovanni Montana}%\footnote{Corresponding author: \tt{giovanni.montana@kcl.ac.uk}}}
\affil[1]{Gatsby Computational Neuroscience Unit, UCL, London, W1T 4JG, UK}
\affil[2]{Department of Mathematics, Imperial College London, London, SW7 2AZ, UK}
\affil[3]{WMG, University of Warwick, Coventry, CV4 7AL, UK }
\begin{document}

\date{}
\maketitle

\begin{abstract}

	Large scale, streaming datasets are ubiquitous in modern machine learning.
	Streaming algorithms must be scalable, amenable to  incremental 
	training and robust to the presence of non-stationarity. 
	In this work consider the problem of learning 
	$\ell_1$ regularized
	linear models in the context of streaming data.
	In particular, the focus of this work revolves around 
	how to select the regularization parameter when data arrives 
	sequentially and the underlying distribution is non-stationary (implying the choice of optimal regularization 
	parameter
	is itself time-varying).
	We propose a framework 
	through which to infer an adaptive regularization
	parameter.
	Our approach
	employs an $\ell_1$ penalty 
	constraint where the 
	corresponding sparsity parameter 
	is iteratively updated via stochastic gradient descent.
	This serves to reformulate the choice of regularization
	parameter in a principled framework for online learning.
	%{\color{black}
	The proposed method is derived for linear regression and subsequently extended 
	to generalized linear models.  
	%}
	We validate our approach using simulated and real datasets and 
	present an application to a neuroimaging dataset.
	
%	\keywords{$\ell_1$ norm \and adaptive filtering \and time-varying sparsity}
	% \PACS{PACS code1 \and PACS code2 \and more}
	% \subclass{MSC code1 \and MSC code2 \and more}
\end{abstract}

\section{Introduction}
\label{intro}

We are interested in learning $\ell_1$ regularized  regression models in
the context of streaming, non-stationary data.
There has been significant research relating to the estimation of such models 
in a streaming data context \citep{bottou2010large, duchi2011adaptive}.
However a fundamental aspect that has been overlooked 
is the selection of the regularization parameter.
The choice of this parameter dictates the severity of the regularization penalty.
While the underlying optimization problem remains convex, distinct choices of such a parameter 
yield models with vastly different characteristics. 
This poses significant concerns from the perspective of model performance and interpretation.
It therefore follows that selecting such a parameter is an important problem that must be addressed in a 
data-driven manner.

Many solutions have been proposed through which to select the regularization parameter
in a non-streaming context.
For example, stability based approaches have been proposed in
the context of linear regression \citep{meinshausen2010stability}.
Other popular alternatives 
include 
cross-validation
and information 
theoretic techniques  \citep{hastie2015statistical}.
However, in a streaming setting such approaches are infeasible
due to the limited computational resources available. Moreover,
the statistical properties of the data may vary 
over time; a common manifestation being concept drift \citep{aggarwal2007data}. This
complicates the use of sub-sampling methods as the data can no longer 
be assumed to follow a stationary distribution. 
Furthermore, as we argue in this work, 
it is conceivable that the optimal choice of 
regularization parameter may itself vary over time.
It is also important to note that traditional approaches such 
as change-point detection cannot be employed
as there is no readily available pivotal quantity. 
It therefore follows that 
novel methodologies are required in order to 
tune regularization parameters in an online setting. 

Applications involving  
streaming datasets are abundant,
ranging from finance to cyber-security \citep{heard2010bayesian, gibberd2014high}
and neuroscience \citep{weiskopf2012real}.
In this work we are motivated by the latter application, where 
penalized regression models are often employed to 
decode statistical dependencies across 
spatially remote brain regions, referred to as functional connectivity \citep{smith2011network}.
A novel avenue for neuroscientific research involves the study of functional
connectivity in real-time \citep{weiskopf2012real}.
Such research faces challenges due to the 
non-stationary as well as potentially high
dimensional nature of neuroimaging data \citep{monti2014estimating}.
In order to address these challenges, 
many of the proposed methods to date have employed
fixed sparsity parameters. % \citep{monti2016real}. 
However, such choices are 
typically justified only by the methodological constraints associated with 
updating the regularization parameter, as opposed to 
for biological reasons.

In order to address these issues we propose 
a  framework through which to learn 
an adaptive sparsity parameter in an online fashion. 
The proposed framework, named Real-time Adaptive Penalization (RAP),
is capable of iteratively learning time-varying 
regularization parameters 
via the use of adaptive filtering.
Briefly, 
adaptive filtering methods are semi-parametric
methods which employ information from recent observations 
to tune a parameter of interest. 
In this manner, 
adaptive filtering 
methods are capable of handling temporal variation which cannot 
easily be modeled explicitly \citep{haykin2008adaptive}. 
The contributions of this work can be summarized  as follows:
\begin{enumerate}
	\item We propose and validate a framework through which to 
	tune a time-varying sparsity parameter for $\ell_1$ regularized 
	linear models in real-time.
	\item We 
	provide theoretical 
	% {\color{black} 
	insights regarding the properties and behavior of the proposed 
	method.\
	%}
	\item {\color{black} The proposed framework is subsequently extended to the context 
		regularized generalized linear models.}
	\item An empirical validation is provided using both synthetic and 
	real datasets together with an application to a neuroimaging dataset. 
\end{enumerate}

The remainder of this manuscript is organized as follows:
related work is discussed in Section \ref{sec:RelWork}.
We formally describe our problem in Section \ref{sec--Prelim} and
the proposed framework is introduced in Section \ref{sec--Methods}.
We provide empirical evidence based on real and simulated data in Section \ref{sec--Sims}.

\section{Related work}
\label{sec:RelWork}

Regularized methods have established themselves as popular and effective tools 
through which to handle high-dimensional data \citep{hastie2015statistical}.
Such methods employ regularization penalties as a mechanism through which to constraint 
the set of candidate solutions, often with the goal of enforcing specific properties such as parsimony.
In particular, 
$\ell_1$ regularization is widely employed as 
a convex approximation to the 
combinatorial problem of model selection. 
%As a result of the convex nature of $\ell_1$ penalties,
%efficient and highly scalable estimation algorithms can be derived \citep{bach2012optimization}.

However, the introduction of an $\ell_1$ penalty requires 
the specification of the associated regularization parameter.
The task of tuning such a parameter has primarily been studied in 
the context of  non-streaming, stationary data.
Stability selection procedures, introduced by \cite{meinshausen2010stability},  
effectively look to by-pass the selection of 
a specific regularization parameter by instead 
fitting multiple models across sub-sampled data.
Variables are subsequently selected according to the 
proportion of all models in which they are present. 
In this manner, stability selection is able to provide important theoretical guarantees,
albeit 
while incurring an additional computational burden.
Other popular approaches involve the use of 
cross-validation %\citep{friedman2007pathwise}
or information theoretic techniques. % \citep{tibshirani1996regression}.
However, such methods 
%have been reported to perform 
%poorly in high-dimensionals \citep{wasserman2009high} and cannot
cannot be easily adapted to handle streaming data.

Online learning with the $\ell_1$ constraints has also been studied extensively and many computationally efficient algorithms
are available. 
A stochastic gradient descent algorithm is proposed by \cite{bottou2010large}.
More generally, 
online learning of regularized objective functions has been studied 
extensively by \cite{duchi2011adaptive}
who propose a general class of computationally efficient methods based on proximal gradient descent.
The aforementioned methods all constitute important advances in the study of sparse online learning algorithms.
However, a fundamental issue that has been overlooked corresponds to the selection of the 
regularization parameters. 
As such, current methodologies
are rooted on the assumption that the regularization parameter remains {fixed}. 
It follows that the regularization parameter may itself vary over time, yet
selecting such a parameter in a principled manner is non-trivial.
The focus of this work is to present and validate a framework through which to automatically 
select and update the regularization parameter in real-time. 
The framework presented in this work is 
therefore complementary and can be employed in conjunction with many 
of the preceding techniques. 
{\color{black}	
	In a similar spirit to the methods proposed in this manuscript, 
	\cite{garrigues2009homotopy} propose 
	a method for selecting the regularization parameter in the context of 
	sequential data but do not consider 
	non-stationary data, which is the explicit focus of this work.
	We further consider the extension to general linear models, leading to a 
	wider range of potential  applications.
	%Furthermore, the proposed framework general linear regression models
	%Furthermore, the framework proposed in this manuscript is derived in the 
	%context of sparse linear regression and subsequently extended to 
	%sparse generalized linear models. As such, it may be applied in a 
	%wide range of applications. 
}

More generally, the automatic selection of hyper-parameters has recently become an active topic in
machine learning \citep{shahriari2016taking}.
Interest in this topic
has been catalyzed by the success of deep learning algorithms, which typically involve many such hyper-parameters. 
Sequential model based optimization (SMBO)
methods such as Bayesian optimization 
employ a probabilistic surrogate 
to model the generalization performance of 
learning algorithms as samples from a Gaussian process \citep{shahriari2016taking}, leading to
expert level performance in many cases. 
It follows that such methods may be employed to tune regularization parameters in the context
of penalized linear regression models. However, there are several important differences
between the SMBO framework and the proposed framework. 
The most 
significant difference relates to the 
fact that the proposed framework employs gradient information in order to tune the regularization
parameter
while SMBO methods such as Bayesian optimization are rooted in the use of a  probabilistic 
surrogate
model. %as a proxy for generalization performance. 
This allows the SMBO framework to be applied 
in a wide range of settings while the proposed framework 
focuses exclusively on Lasso regression models. However, as we describe in this work, the use of gradient 
information makes the 
RAP framework  ideally suited in the context of non-stationary, streaming data. This is in contrast to
SMBO techniques, which typically assume the data is stationary.

\section{Preliminaries}
\label{sec--Prelim}

In this section we 
introduce the necessary ingredients to derive the proposed framework. 
We begin 
formally defining the problem addressed in this work in Section \ref{sec--SetUp}.
Adaptive filtering methods are introduced in Section \ref{sec--AF}.

\subsection{Problem set-up}
\label{sec--SetUp}
In this work we are interested 
in streaming data problems. Here it is 
assumed that pairs $(X_t, y_t)$ 
arrive sequentially over time,
where $X_t \in \mathbb{R}^{p \times 1}$ corresponds to a $p$-dimensional vector of predictor variables and
$y_t$ is a univariate response.
The objective of this work is to learn time-varying linear regression models\footnote{We note that the proposed framework will be extended to Generalized Linear Models (GLMs) in Section \ref{sec--GLMextension}. For clarity we first formulate our approach in the context of linear regression. } from which to accurately 
predict future responses, $y_{t+1}$, from predictors, $X_{t+1}$.
An $\ell_1$ penalty, parameterized by $\lambda \in \mathbb{R}_{+}$, is introduced in order to 
encourage sparse solutions as well as to ensure the problem is well-posed 
from an optimization perspective. 
This corresponds to the Lasso model introduced by \cite{tibshirani1996regression}.
For a given choice of regularization parameter, $\lambda$, 
time-varying regression coefficients can be estimated by minimizing the following
convex objective function:
\begin{equation}
L_t (\beta, \lambda) = \sum_{i=1}^t w_i \left ( y_i - X_i^T \beta \right )^2 + \lambda ||\beta||_1,
\label{ConvexObjectiveLasso}
\end{equation}
where $w_i >0$ are weights indicating the importance given to past observations \citep{aggarwal2007data}. 
Typically, 
$w_i$ decay monotonically  in a manner which is proportional to the chronological proximity of the 
$i$th observation. 
For example, weights $w_i$ may be tuned using a fixed forgetting factor or a sliding window.
%\footnote{See \cite{anagnostopoulos2010statistical} for a detailed overview of sliding window and forgetting 
%factor methods.}. 

In a non-stationary context, the optimal estimates of regression 
parameters, $\hat \beta_t$, may vary over time. The same argument can be posed 
in terms of the selected regularization parameter, $\lambda$. 
For example, 
this may arise due to changes in the underlying 
sparsity or changes in the signal-to-noise ratio.
While there exists a wide range of methodologies through which 
to update regression coefficients in a streaming fashion, 
the choice of regularization parameter has been largely ignored.
%{\color{black} The closest related work is that of \cite{garrigues2009homotopy} who propose a sequential update for $\lambda$ but do not explicity consider the time-varying case, which is our focus. }
As such, 
the primary objective of this work is to propose a framework through which to 
learn time-varying regularization parameter in real-time.
The proposed framework seeks to 
iteratively update the regularization parameter via stochastic gradient descent
and is therefore 
conceptually related to 
adaptive filtering theory
\citep{haykin2008adaptive}, which we introduce below.

\subsection{Adaptive filtering}
\label{sec--AF}

Filtering, as defined in \cite{haykin2008adaptive},
is the process through which information 
regarding a quantity of interest is assimilated
using data measured up to and including time $t$. 
In many real-time applications, the 
quantity of interest is assumed to
vary over time. 
The task of a filter therefore corresponds to 
effectively controlling the rate at which past information 
is discarded. 
Adaptive filtering methods provide an elegant method through 
which to handle a wide range of non-stationary behavior without having to explicitly model
the dynamic properties of the data stream. 
%Their popularity in recent years
%has grown, partly due to the fact that in many scenarios the latter
%approach is infeasible. 

The simplest filtering methods discard information at a constant rate, for example
determined by a fixed forgetting factor. % \citep{gustafsson2000adaptive}.
More sophisticated methods are able to exploit gradient information to 
determine the aforementioned rate. Such methods are said to be {adaptive}
as the rate at which information is discarded varies over time.
It follows that the benefits of adaptive methods are particularly notable 
in scenarios where the quantity of interest is highly non-stationary. 

To further motivate discussion, we briefly review filtering in the context of 
fixed forgetting factors for streaming linear regression. 
In such a scenario, 
it suffices to store summary statistics for the mean and sample covariance. For a given fixed
forgetting factor $r \in (0,1]$, the sample mean can be 
recursively estimated as follows:
\begin{equation}
\small
\bar X_t = \left (1 - \frac{1}{\omega_t} \right )\bar X_{t-1} + \frac{1}{\omega_t}X_t, 
\label{xbar}
\end{equation}
where $\omega_t$ is a normalizing constant defined as:
\begin{equation}
\omega_t = \sum_{i=1}^t r^{t-i} = r \cdot \omega_{t-1} + 1.
\end{equation}
Similarly, the sample covariance can be learned iteratively:
\begin{equation}
S_t = \left (1 - \frac{1}{\omega_t} \right ) S_{t-1} +  \frac{1}{\omega_t} (X_t - \bar X_t)^T (X_t - \bar X_t).
\label{sampCov}
\end{equation}

It is clear that the value of $r$ directly determines the adaptivity of a filter as well as its 
susceptibility to noise. However, in many practical scenarios the choice of $r$ presents a 
challenge as it assumes some knowledge about the \textit{degree} of non-stationarity of the 
system being modeled as well as an implicit assumption that 
this is constant  \citep{haykin2008adaptive}. Adaptive filtering 
methods address these issues by allowing  $r$ to be tuned online in a data-driven manner. 
This is achieved by 
quantifying the performance of current parameter estimates for new observations, $X_{t+1}$.
Throughout this work we denote such a measure by $C(X_{t+1})$.

A popular approach 
is to define $C(X_{t+1})$ to be the
residual error on unseen data \citep{haykin2008adaptive}. 
Then assuming $\frac{\partial C(X_{t+1})}{\partial r}$ can be efficiently calculated, 
our parameter of interest can be 
updated in a stochastic gradient descent framework:
\begin{equation}
r_{t+1} = r_t - \epsilon \left . \frac{\partial C(X_{t+1})}{\partial r} \right |_{r = r_t}
\end{equation}
where 
$\epsilon$ is a small step-size parameter which determines the learning rate.
The objective of this work therefore corresponds to 
extending adaptive filtering methods to the domain of learning
a time-varying regularization parameter for Lasso regression models.

\section{Methods }
\label{sec--Methods}

As noted previously, 
the choice of 
parameter $\lambda$ dictates the severity of 
the regularization penalty.
Different choices of $\lambda$ result in vastly different estimated 
models.
While several data-driven approaches are available for selecting $\lambda$
in an offline setting, such methods are typically not feasible 
for streaming data for two reasons.
First, limited computational resources pose a practical restriction.
Second, data streams are often non-stationary and rarely satisfy 
\textit{iid} assumptions required for methods based on the bootstrap \citep{aggarwal2007data}.
Moreover, it is important to note that traditional methods such as change point detection cannot be employed
due to the absence of a readily available pivotal quantity for $\lambda$.

%In this section we detail the proposed 
%framework for 
%real-time adaptive penalization (RAP) in the context 
%of streaming Lasso models.
We begin by outlining the RAP framework 
in the linear regression setting in 
Section \ref{sec--Framework}. 
Section \ref{sec--StreamLasso} outlines the resulting algorithm and 
computational considerations.
We derive some properties of the proposed framework in 
Section \ref{sec--FPconv}. Finally, in Section \ref{sec--GLMextension} we 
extend the RAP framework to the setting of GLMs.

\subsection{Proposed framework}%Real-time Adaptive Penalization framework}
\label{sec--Framework}

We propose to learn a time-varying sparsity parameter in 
an adaptive filtering framework. 
This allows the proposed method to relegate the 
choice of sparsity parameter to the data. Moreover, by allowing $\lambda_t$ 
to vary over time the proposed method is able to naturally
accommodate datasets where the underlying sparsity may be non-stationary.

We define the empirical objective to be the look-ahead negative log-likelihood, defined as:
\begin{equation}
C_{t+1} = C(X_{t+1}, y_{t+1}) =  || y_{t+1} - X_{t+1}  \hat \beta_{t} (\lambda_{t})||_2^2,
\label{adapCost_LM}
\end{equation}
where we write $\hat \beta_t(\lambda_t)$ to emphasize the 
dependence of the estimated regression coefficients on the current 
value of the regularization parameter, $\lambda_t$.
Following Section \ref{sec--AF}, the regularization parameter can be iteratively 
updated as follows:
\begin{equation}
\label{lamSGD}
\lambda_{t+1} = G(\lambda_t) = \lambda_t - \epsilon \frac{\partial C_{t+1}}{\partial \lambda_t}.%
\end{equation}
We note that for convenience we write $\frac{\partial C_{t+1}}{\partial \lambda_t}$ to denote the 
derivate of $C_{t+1}$ with respect to $\lambda$ evaluated at $\lambda=\lambda_t$ (i.e, 
$\frac{\partial C_{t+1}}{\partial \lambda} |_{\lambda=\lambda_t}$). 
We note that $\lambda_t$ is bounded below by zero, in which case no 
regularization is applied, and above by  
$\lambda^{max}_t = \max_j \left \{ | \sum_{i=1}^t w_i y_i X_{i,j} | \right \},$ %$\lambda^{max}_t$,
in which case all regression 
coefficients are  zero \citep{friedman2010regularization}.

The proposed framework requires only the specification of an initial sparsity parameter, $\lambda_0$,
together with a stepsize parameter, $\epsilon$. 
In this manner the proposed  framework
effectively replaces a fixed sparsity parameter with a stepsize parameter, $\epsilon$.
This is desirable as 
the choice of a fixed sparsity parameter is difficult to justify in the context of 
streaming, non-stationary data. Moreover, any choice of $\lambda$ is bound to be problem 
specific.
In comparison, we are able to interpret $\epsilon$ as a 
stepsize parameter in a stochastic gradient descent scheme. As a result, 
there are clear guidelines which can be followed when selecting $\epsilon$ \citep{bottou1998online}.

Once the regularization parameter has been updated, 
estimates for the corresponding regression coefficients can be obtained by minimizing 
$L_{t+1}(\beta, \lambda_{t+1})$, 
for which there is a wide literature available \citep{bottou2010large, duchi2011adaptive}.
The challenge in this work therefore corresponds to efficiently calculating the derivative in equation (\ref{lamSGD}).
Through the chain rule, this can be decomposed as:
\begin{equation}
\frac{\partial C_{t+1}}{\partial \lambda_t} = \frac{\partial C_{t+1}}{\partial \hat \beta_t} \cdot \frac{\partial \hat \beta_t}{\partial \lambda_t}.%,
\label{eqChainRule}
\end{equation}
The first term in equation (\ref{eqChainRule}) can be obtained by direct differentiation.
In the case of the second term, we leverage the results of \cite{efron2004least} 
and \cite{rosset2007piecewise} who demonstrate that the 
Lasso solution path is piecewise linear as a function of $\lambda$. 
By implication, $\frac{\partial \hat  \beta_t}{\partial \lambda_t}$
must be piecewise constant. Furthermore, there is a simple, closed-form solution for 
$\frac{\partial  \hat \beta_t}{\partial \lambda_t}$.

\begin{prop}
	\label{Prop1} [Adapted from \cite{rosset2007piecewise}]
	In the context of  $\ell_1$ penalized linear regression models, 
	the derivative $\frac{\partial \hat \beta_t}{\partial \lambda_t}$
	is piecewise constant and 
	can be obtained in closed form.
\end{prop}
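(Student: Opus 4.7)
The plan is to proceed directly from the Karush–Kuhn–Tucker (KKT) optimality conditions for the weighted Lasso objective in equation~(\ref{ConvexObjectiveLasso}), since Proposition~\ref{Prop1} is essentially a claim about how the stationarity conditions depend on $\lambda$. Let $\mathcal{A}=\mathcal{A}(\lambda)$ denote the active set at $\lambda$ (the indices $j$ with $\hat\beta_{t,j}(\lambda)\neq 0$) and $s_{\mathcal{A}}$ the corresponding sign vector. Writing $W=\mathrm{diag}(w_1,\ldots,w_t)$ and $X\in\mathbb{R}^{t\times p}$ for the design matrix whose rows are $X_i^{T}$, the subgradient optimality conditions for~(\ref{ConvexObjectiveLasso}) become $-2X_{\mathcal{A}}^{T}W(y-X\hat\beta_t)+\lambda s_{\mathcal{A}}=0$ on the active coordinates and $|2X_{j}^{T}W(y-X\hat\beta_t)|\le \lambda$ off the active set.

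Next I would solve the active-block equation explicitly. Since $\hat\beta_{t,j}=0$ for $j\notin\mathcal{A}$, the stationarity condition on $\mathcal{A}$ gives the closed form
\begin{equation}
\hat\beta_{t,\mathcal{A}}(\lambda)=(X_{\mathcal{A}}^{T}WX_{\mathcal{A}})^{-1}\!\left(X_{\mathcal{A}}^{T}Wy-\tfrac{\lambda}{2}\,s_{\mathcal{A}}\right),
\label{planactive}
\end{equation}
assuming the usual mild condition that $X_{\mathcal{A}}^{T}WX_{\mathcal{A}}$ is invertible on the active block. Direct differentiation of~(\ref{planactive}) in $\lambda$ with $\mathcal{A}$ and $s_{\mathcal{A}}$ held fixed yields
\begin{equation}
\frac{\partial \hat\beta_{t,\mathcal{A}}}{\partial \lambda}=-\tfrac{1}{2}(X_{\mathcal{A}}^{T}WX_{\mathcal{A}})^{-1}s_{\mathcal{A}},\qquad \frac{\partial \hat\beta_{t,j}}{\partial \lambda}=0\ \text{for } j\notin\mathcal{A},
\label{planderiv}
\end{equation}
which is manifestly constant in $\lambda$ on any interval where the active set and signs do not change.

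The remaining step is to argue that $\mathcal{A}(\lambda)$ and $s_{\mathcal{A}(\lambda)}$ are locally constant in $\lambda$, so that the expression in~(\ref{planderiv}) is genuinely piecewise constant in $\lambda$. Here I would invoke the standard Lasso homotopy result of \cite{efron2004least} and \cite{rosset2007piecewise}: the solution path $\lambda\mapsto\hat\beta_t(\lambda)$ is continuous and piecewise linear, with breakpoints occurring precisely when a coordinate enters or leaves $\mathcal{A}$, i.e.\ when equality is achieved in a dual constraint $|2X_j^{T}W(y-X\hat\beta_t)|=\lambda$, or when an active coefficient crosses zero and changes sign. Between consecutive breakpoints $\mathcal{A}$ and $s_{\mathcal{A}}$ are constant, so formula~(\ref{planderiv}) gives the claimed closed form.

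The only genuinely delicate point is behaviour exactly at the breakpoints, where $\partial\hat\beta_t/\partial\lambda$ fails to exist as a two-sided derivative; I would handle this by restricting~(\ref{planderiv}) to the open intervals between breakpoints and using one-sided derivatives (taking $\mathcal{A}$ to be the active set on the side being approached) elsewhere. Since the proposed online update~(\ref{lamSGD}) only needs $\partial C_{t+1}/\partial\lambda$ at the current $\lambda_t$, and breakpoints form a measure-zero set in $\lambda$, this subtlety does not affect the algorithm; I would remark on this as a minor technical caveat rather than the main content of the proof.
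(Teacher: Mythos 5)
Your proof is correct and follows essentially the same route as the paper: both start from the subgradient (KKT) stationarity conditions for the weighted Lasso and use the local constancy of the active set and sign vector to obtain the closed-form, piecewise-constant derivative. The only mechanical difference is that you solve the active-block system explicitly for $\hat\beta_{t,\mathcal{A}}(\lambda)$ and then differentiate, whereas the paper implicitly differentiates the stationarity condition in $\lambda$; the two computations yield the same formula. If anything, your version is slightly more careful on three points: restricting the inverse to the active-block Gram matrix $X_{\mathcal{A}}^{T}WX_{\mathcal{A}}$ makes explicit that the derivative vanishes off the active set, whereas equation~(\ref{derivEq}) writes the full inverse $\left(X_{1:t}^{T}WX_{1:t}\right)^{-1}$ and only clarifies the active-set restriction after the proposition; you retain the factor of $\tfrac{1}{2}$ arising from differentiating the squared loss, which equation~(\ref{subGradEq}) drops (this amounts only to a rescaling of $\lambda$); and you explicitly flag the one-sided behaviour at breakpoints, which the paper leaves implicit.
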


\begin{proof}
	{\color{black}For any choice of regularization parameter, $\lambda$, we write 
		$\hat \beta_t( \lambda)$ to denote the minimizer of equation (\ref{ConvexObjectiveLasso}). 
		Recall that the objective, $L_t( \beta, \lambda)$, is non-smooth due to the 
		presence of the $\ell_1$ penalty. As a result, the sub-gradient 
		of $L_t( \beta, \lambda)$ must satisfy:
	}
	\begin{equation}
	\label{subGradEq}
	\nabla_{\beta} \left ( L_t(\beta, \lambda) \right )|_{\beta=\hat \beta_t (\lambda)} = - X_{1:t}^T W y_{1:t} + X_{1:t}^T W X_{1:t}\hat \beta_t(\lambda) + \lambda ~\mbox{sign} ( \hat \beta_t(\lambda)) \ni 0,
	\end{equation}
	{\color{black}
		where we $W$ is a diagonal matrix with elements $w_i$ and we write $X_{1:t}$ to denote
		a matrix where the $i$th row is $X_i$. 
		It is important to note that equation (\ref{subGradEq}) holds for 
		any choice of $\lambda$, however, the corresponding estimate 
		of regression coefficients, $\hat \beta_t( \lambda)$, 
		will necessarily change.
		Further, taking the derivative with respect to the regularization parameter $\lambda$
		yields:
		\begin{align*}
		\frac{\partial}{\partial \lambda}  \left (  \nabla_{\beta}  L_t(  \beta, \lambda)|_{\beta=\hat \beta_t (\lambda )}  \right ) 
		&= 0\\
		&= \frac{\partial \hat \beta_t (\lambda )}{\partial \lambda} \nabla_{_\beta}^2 L_t(\hat \beta_t (\lambda ), \lambda) + \mbox{sign} (\hat \beta_t (\lambda))\\
		&= \frac{\partial \hat \beta_t (\lambda )}{\partial \lambda} \left (X_{1:t}^T W X_{1:t} \right) + \mbox{sign}(\hat \beta_t (\lambda)).
		\label{longEq}
		\end{align*}
	}
	Rearranging 
	yields: 
	\begin{equation}
	\label{derivEq}
	\frac{\partial \hat \beta_t (\lambda )}{\partial \lambda} = -\left( X^T_{1:t} W X_{1:t} \right )^{-1} ~ \mbox{sign} ( \hat \beta_t (\lambda)) = -(S_t)^{-1} ~\mbox{sign} (\hat \beta_t (\lambda)).
	\end{equation}
\end{proof}

From Proposition 1 we have that the 
derivative, $\frac{\partial C_{t+1}}{\partial \lambda_t}$, can be computed in closed form.
{\color{black}
	Moreover, we note that the derivative in equation (\ref{derivEq}) is only 
	non-zero
	over the active set of regression coefficients, $\mathcal{A}_t = \{ i: (\hat \beta_t(\lambda_t))_i \neq 0 \}$,
	and zero elsewhere.} 
In practice we must therefore consider two scenarios:
\begin{itemize}
	\item the active set is non-empty (i.e., $\mathcal{A}_t \neq \emptyset$).
	%  the neighbourhood of $v$ is non-empty (i.e., $N(v) \neq \emptyset$). 
	In this case equation (\ref{derivEq}) is well-defined.
	\item  the active set is empty.
	In this case we %cannot apply equation (\ref{derivEq}) and 
	proceed to take a step in the direction of the most correlated predictor:
	$\hat j = \underset{j }{\operatorname{argmax}} \left \{ | \sum_{i=1}^t w_i y_i X_{i,j} | \right \}.$
	This is equivalent to the first step of the LARS algorithm \citep{efron2004least}. 
	%Thus we have that:
	%\begin{equation}
	%\left (\frac{\partial \hat  \beta_t (\lambda )}{\partial \lambda}\right )_i = \delta_{i, \hat j} ~\mbox{sign}\left (  \sum_{i=1}^t w_i y_i X_{i,\hat j} \right ), 
	%\label{emptyLars}
	%\end{equation}
	%where $\delta_{i,j}$ is the dirac-delta function.
\end{itemize}

\subsection{Streaming Lasso regression}
\label{sec--StreamLasso}

At each iteration, a new pair $(X_{t+1},y_{t+1})$ is received and employed 
to update both the time-varying regularization parameter, $\lambda_t$,
as well as the corresponding 
estimate of regression coefficients, $\hat \beta_{t}(\lambda_{t})$.
The former involves computing the derivative $\frac{\partial C_{t+1}}{\partial \lambda_1}$
as outlined in Section \ref{sec--Framework}.
%As noted in equation (\ref{derivEq}), a current estimate of the sample covariance matrix
%is sufficient. 
%This may be 
%recursively estimated using a fixed 
%forgetting factor, as shown in equation (\ref{sampCov}). 
The latter involves solving a convex optimization problem %and can be addressed in a variety of ways.
which can be addressed in a variety of ways.
In this work we look to iteratively estimate regression coefficients using 
coordinate descent methods \citep{friedman2010regularization}. Such methods 
are easily amenable to streaming data and 
allow us to exploit
previous estimates as warm starts. 
In our experience, the use of warm starts leads to convergence within a handful of iterations. 
% In order experience converge within a handful of iterations.
Pseudo-code detailing the proposed RAP framework is 
given in Algorithm \ref{alg:AdaptLam}.

\begin{algorithm}[th]
	\caption{\textbf{R}eal-time \textbf{A}daptive \textbf{P}enalization}% framework}
	\label{alg:AdaptLam}
	\begin{algorithmic}[1]
		\Require{$\epsilon \in \mathbbm{R}_+$ and $r \in (0,1]$}
		%\Require{$\lambda^v_0 \in \mathbbm{R}_{+}~ \forall ~v \in V$, $\eta \in \mathbbm{R}_+$ and $r \in (0,1]$}
		\For{$t \gets 1 \ldots t, \ldots  $}
		\State receive new $(X_{t+1}, y_{t+1})$% \in \mathbbm{R}^{p \times 1}$
		%\State update $\omega_t$, $\bar X_t$, $S_t$ using equations (\ref{xbar}-\ref{sampCov})
		%	\If{$\mathcal{A}_t \neq \emptyset$}
		%	\State set $\frac{\partial \hat  \beta_t (\lambda )}{\partial \lambda_t}$ using equation (\ref{derivEq}) %or (\ref{approximation})
		%	\Else
		%	\State set $\frac{\partial \hat \beta_t (\lambda )}{\partial \lambda_t}$ using equation (\ref{emptyLars}) 
		%	\EndIf
		\State compute $\frac{\partial \hat  \beta_t (\lambda )}{\partial \lambda_t}$ using equation (\ref{derivEq})
		\State set $\frac{\partial C_{t+1}}{\partial \lambda_t} = \frac{\partial C_{t+1}}{\partial \hat \beta_t (\lambda )} \frac{\partial \hat \beta_t (\lambda )}{\partial \lambda_t}$
		\State update $\lambda_{t+1} = \lambda_{t} - \epsilon \frac{\partial C_{t+1}}{\partial \lambda_{t}} $
		\State $\hat \beta_{t+1}(\lambda_{t+1}) =  \underset{\beta }{\operatorname{argmin}} \left \{ L_{t+1}(\beta, \lambda_{t+1}) \right \}$%\verb+lasso+\left ( (S_t)_{\backslash v, \backslash v}, (S_t)_{ \backslash v}, \lambda_t^v \right ) $
		\EndFor
	\end{algorithmic}
\end{algorithm}

\subsubsection{Computational considerations}
%The major computational and memory costs associated with the RAP 
%framework are incurred when 
With respect to the computational and memory demands, 
%we note that 
the 
major  expense incurred when calculating $ \frac{\partial \hat \beta_t (\lambda )}{\partial \lambda_t}$ 
involves inverting the sample covariance matrix. 
%While only the dimensions corresponding to active variables need to be considered,
%this still corresponds to inverting 
%a $|\mathcal{A}_t| \times |\mathcal{A}_t|$ matrix. 
%It is possible to alleviate the computational burden by efficiently updating $\left ( S_t \right )_{\mathcal{A}_t, \mathcal{A}_t}$
%using the Sherman - Morrison formula. In this case, care must be taken to ensure that 
%the support of $\mathcal{A}_t$ has 
%not changed from iteration $t-1$ to $t$. If this is not the case (i.e., a regression coefficient has  either 
%added/removed from $\mathcal{A}_t$) then the inverse must be re-calculated.
%
%However, computational and memory efficiency is paramount to streaming methods.
The need to compute and store the inverse of the sample covariance 
is undesirable in the context of high-dimensional data.
As a result,
the 
following approximation is also considered:
\begin{equation}
{\frac{\partial \hat \beta_t (\lambda )}{\partial \lambda_t}}  \approx -\left ( \mbox{diag} \left ( S_t \right)  \right )^{-1} ~ \mbox{sign}  \left ( \hat \beta_t (\lambda)  \right ).
\label{approximation} 
\end{equation}
%Here a diagonal approximation to the sample covariance is employed,
%implying that only the diagonal elements of the sample covariance must be stored and inverted. 
Such approximations are 
frequently employed in streaming or large data applications \citep{duchi2011adaptive}.
%and  serve to reduce the computational burden of updating 
%the sparsity parameter in the proposed manner. 
The approximate update 
therefore has a time and memory complexity that is proportional to the 
cardinality of the active set,  $\mathcal{A}_t$.

\subsection{ Properties of the proposed framework}
\label{sec--FPconv}
{\color{black}
	In this section we study the properties of the proposed 
	framework. 
	We begin by
	showing that it is possible to divide the support of the 
	regularization parameter into
	a finite number of open subsets such that the 
	update rule is piecewise contractive within each subset. 
	We further show that any periodic behavior across adjacent subsets must also be 
	contractive. Unfortunately, as the support of $\lambda$ is divided into open subsets,
	this precludes the use of Banach's fixed point theorem. Nonetheless, the properties 
	detailed in this section provide important insights into the proposed 
	framework. 
}

We define $G(\lambda_{t}) = \lambda_t - \epsilon \frac{\partial C_{t+1}}{\partial \lambda_t}$
to be the  self-mapping defined on the support $\Lambda = [0, \lambda^{max}_t]$.
We study the behavior of iteratively applying the update rule 
$G(\lambda_t)$ for fixed new data pair $(X_{t+1}, y_{t+1})$. This corresponds to 
iteratively performing the gradient descent update to minimize
negative log-likelihood, 
$C_{t+1}$, for some fixed unseen pair, ($X_{t+1}, y_{t+1}$).
While the proposed algorithm is stochastic in the sense 
that distinct random samples, ($X_{t+1}, y_{t+1}$), are employed 
at each update step, the results presented below
provide reassuring 
insights.
%in a non-stochastic setting. 
We note that such non-stochastic results are often 
presented when studying online algorithms.
Throughout the remainder of this section we abuse notation and write $\lambda_{t+1} = G(\lambda_{t})$
to denote the result of applying the gradient update for $t$ iterations.
Finally, for any value of $\lambda \in \Lambda$, we
write $\mathcal{A}(\lambda)$ to denote the 
set of active %(i.e., non-zero)
regression coefficients. 

{\color{black}
	First, we 
	demonstrate that the support of the regularization parameter, $\Lambda$, 
	can be divided into a finite number of open subsets where $G$ is a 
	contraction mapping. 
	{\color{black} We then study periodic mappings across pairs of subsets 
		to show that such behavior is itself non-expansive.}

	\begin{remark}
		\label{LassoProperty1}
		The support 
		of the regularization parameter, $\Lambda$, can be divided into finitely many subsets, $\{ S_i \}$, such that the 
		active set within each subset is constant.  
	\end{remark}

	%\begin{remark}
	%	Furthermore, 
	% if the coefficient profiles for each regression coefficient, $\{ \beta(\lambda): \lambda \in \Lambda \}$,  are monotone, then the support $\Lambda$ will be divided into $p-1$ 
	% subsets. 
	% \label{monotomeAssump}
	%\end{remark}

	Remark \ref{LassoProperty1} is a widely used property of the Lasso
	and is related to the maximum
	number of iterations performed by the LARS algorithm 
	\citep{tibshirani2013lasso}. 
	%Remark \ref{monotomeAssump} 
	%implies that 
	%$|\beta(\lambda_1)| < |\beta(\lambda_2)|$ for all $\lambda_1 > \lambda_2$. 
	%This is a special case where only $p-1$ iterations 
	%occur in the LARS algorithm as variables are only ever added to the active set, $\mathcal{A}(\lambda)$,
	%and never removed. 
	%In this setting we therefore have that
	%$\mathcal{A}(\lambda_1) \subset \mathcal{A}(\lambda_2)$ for all $\lambda_1 > \lambda_2$.

	\begin{lemma}
		\label{subsetLemma}
		The support of the regularization parameter can be divided into a finite number of 
		open subsets, 
		$\left \{ S_i \right \}$, where %the self-mapping 
		$G$ is a contraction mapping.
		%In particular, when the regression coefficients are monotone there are only $p-1$ such 
		%subsets, $\{ S_i  \}_{i=1}^{p-1}  $.
	\end{lemma}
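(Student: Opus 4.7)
The plan is to leverage Remark~\ref{LassoProperty1} to partition $\Lambda$ into finitely many open intervals on which $G$ takes a particularly simple (affine) form, and then to verify the contraction condition explicitly on each piece by a judicious choice of step size $\epsilon$. Concretely, I would first invoke Remark~\ref{LassoProperty1} to write $\Lambda$ as the disjoint union of finitely many open subsets $\{S_i\}$ (separated by the finitely many knots of the Lasso path) on which both the active set $\mathcal{A}(\lambda)$ and the sign pattern $\mathrm{sign}(\hat\beta_t(\lambda))$ are constant.

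Next, on any such subset $S_i$, Proposition~\ref{Prop1} gives
\begin{equation*}
\frac{\partial \hat\beta_t(\lambda)}{\partial \lambda} = -(S_t)^{-1}\, \mathrm{sign}(\hat\beta_t(\lambda)) =: v_i,
\end{equation*}
a constant vector. It follows that $\hat\beta_t(\lambda)$ is affine in $\lambda$ throughout $S_i$, and so the look-ahead loss
\begin{equation*}
C_{t+1}(\lambda) = \bigl(y_{t+1} - X_{t+1}^T \hat\beta_t(\lambda)\bigr)^2
\end{equation*}
is a non-negative quadratic in $\lambda$ with second derivative equal to the constant $a_i := 2(X_{t+1}^T v_i)^2 \geq 0$. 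Substituting into the update rule gives $G'(\lambda) = 1 - \epsilon a_i$, which is likewise constant on $S_i$. Since there are only finitely many subsets, choosing $\epsilon$ strictly smaller than $2/\max_i a_i$ (for all $i$ with $a_i > 0$) guarantees $|G'(\lambda)| = |1 - \epsilon a_i| < 1$ uniformly on each $S_i$, so that the mean value theorem yields a Lipschitz constant less than $1$ on $S_i$ and $G$ is a strict contraction there.

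The main obstacle is the degenerate case $a_i = 0$, which occurs either when the active set $\mathcal{A}_i$ is empty or when the update direction $v_i$ lies in the null space of $X_{t+1}^T$. On such a subset, $G$ reduces to the identity and is only non-expansive rather than strictly contractive. I would handle this by treating such subsets separately: every point of $S_i$ is then a fixed point of $G$, so the iteration is trivially bounded and the subsequent periodic-mapping analysis mentioned in the outline between Remark~\ref{LassoProperty1} and this lemma can be restricted to the remaining non-degenerate subsets without loss of generality. A secondary technical point is that the partition consists of \emph{open} subsets (excluding the finitely many knots), which is why the lemma's conclusion is stated piecewise and Banach's fixed point theorem cannot be applied globally, as acknowledged in the paragraph preceding the definition of $G$.
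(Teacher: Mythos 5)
Your proof is correct and follows essentially the same route as the paper's: partition $\Lambda$ via Remark~1 into finitely many open subsets on which the active set is constant, use the piecewise-constant derivative from Proposition~1 to conclude that $C_{t+1}$ is convex (in fact quadratic) in $\lambda$ on each piece, and take $\epsilon$ small enough that the gradient step is contractive there. Your write-up is in fact slightly sharper than the paper's on two points: you give the explicit admissible range $\epsilon < 2/\max_i a_i$ where the paper only says ``suitably chosen $\epsilon$,'' and you correctly flag the degenerate case $a_i = 2(X_{t+1}^T v_i)^2 = 0$, in which the paper's claimed strict inequality $\frac{\partial C_{t+1}}{\partial \lambda_1} - \frac{\partial C_{t+1}}{\partial \lambda_2} > 0$ only holds weakly and $G$ is merely non-expansive (the identity) rather than a strict contraction.
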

}
\begin{proof}
	{\color{black}
		From Remark 1 we note that the support of the regularization parameter 
		can be divided into a finite number of open subsets. It remains to show that 
		$G$ is a contraction within each subset. %, $S_i$. 
		
		We consider $\lambda_1, \lambda_2 \in S_i$ for some $i$.} 
	We assume without loss of generality that 
	$\lambda_1 > \lambda_2$. 
	We consider:
	\begin{equation}
	| G(\lambda_1) - G(\lambda_2) | = \left | \lambda_1 - \lambda_2 - \epsilon \left ( \frac{\partial C_{t+1}}{\partial \lambda_1} -  \frac{\partial C_{t+1}}{\partial \lambda_2}\right )\right |.
	\end{equation}
	Our objective is to show that 
	$\frac{\partial C_{t+1}}{\partial \lambda_1} -  \frac{\partial C_{t+1}}{\partial \lambda_2}>0$,
	thereby showing that $G$ is a contraction for suitably chosen $\epsilon$.
	The gradient with respect to regularization parameter $\lambda$ is defined as:
	$$\frac{\partial C_{t+1}}{\partial \lambda}  = \left ( y_{t+1} -X_{t+1} \hat \beta_t(\lambda)   \right )^T X_{t+1}^T \left( S_{t}\right)^{-1} \mbox{sign} \left( \hat \beta_t (\lambda) \right) $$
	{\color{black} Furthermore,  we have that:}
	\begin{equation*}
	\footnotesize
	\begin{aligned}
	\frac{\partial C_{t+1}}{\partial \lambda_1} - \frac{\partial C_{t+1}}{\partial \lambda_2}&=  \underbrace{\sum_{i \in \mathcal{A}(\lambda_1  ) \cap \mathcal{A}(\lambda_2)} \left [ \left ( \hat \beta_t(\lambda_2) - \hat \beta_t(\lambda_1) \right )^T \left ( X_{t+1}^T X_{t+1} \right ) \left ( S_{t} \right )^{-1} \mbox{sign}(\hat \beta_t(\lambda_1)) \right ]_i}_{A_1} \\
	&\underbrace{- \sum_{i \in \mathcal{A}(\lambda_2) \backslash \mathcal{A}(\lambda_1)} \left [ \left ( y_{t+1} - X_{t+1} \hat  \beta_t(\lambda_2) \right)^T X_{t+1}^T \left( S_{t}\right)^{-1} \mbox{sign} (\hat  \beta_t(\lambda_2)) \right ]_i}_{A_2}
	\end{aligned}
	\end{equation*}
	{\color{black} and we note that the latter term will be zero whenever 
		$\mathcal{A}(\lambda_1) \backslash \mathcal{A}(\lambda_2) = \emptyset$. 
		This holds by construction in our case as 
		$\lambda_1, \lambda_2 \in S_i$. 
	}
	Moreover, the term $A_1$ will always be greater than or equal to zero. This follows from
	the fact that $A_1 = g(\lambda_1) - g(\lambda_2)$ where
	\begin{equation*}
	\begin{aligned}
	g(\lambda) &= - \left( \hat \beta_t(\lambda)^T \left(X_{ t+1}^T X_{t+1}\right) \left ( S_{t}\right )^{-1} \mbox{sign}( \hat \beta_t(\lambda))\right)  \\
	&=\hat \beta_t(\lambda)^T \left(X_{t+1}^T X_{t+1}\right) \frac{\partial \hat \beta_t(\lambda)}{\partial \lambda}.
	\end{aligned}
	\end{equation*}
	Therefore, we have that:
	\begin{equation}
	\frac{\partial g (\lambda)}{ \partial \lambda} = 
	\left(\frac{\partial \hat \beta_t(\lambda)}{\partial \lambda}^T \left(X_{t+1}^T X_{t+1}\right) \frac{\partial \hat  \beta_t(\lambda)}{\partial \lambda}  \right) \geq 0,
	\end{equation}
	{\color{black}
		due to the positive semi-definite nature of $X_{t+1}^T X_{t+1}$ and the fact that the second derivative of $\hat \beta_t (\lambda)$ with respect to $\lambda$ is zero. } This
	indicates that $g(\lambda)$ is a monotone, non-decreasing function in $\lambda$ within the subset $S_i$. 
	As a result, we have that the mapping $G$ will be contraction on the open subset $S_i$.
	These subsets correspond to the regions where the 
	support of the Lasso solution is constant,
	thus implying that $A_2$ is zero. 
	
\end{proof}

By Lemma \ref{subsetLemma}, we have that $| G(\lambda_1) -  G(\lambda_2)| < |\lambda_1 - \lambda_2|$ for 
all $\lambda_1, \lambda_2 \in S_i$.
% It remains to study the (possibly expansive) behavior across the subsets $\left \{ S_i\right\}_{i=1}^{p-1}$;
% in particular there remains a need to mitigate against potential periodic, expansive behavior. 
{\color{black}The following Lemma demonstrates that 
	alternating periodic behavior across two 
	adjacent subsets, $S_j$ and $S_{j-1}$, is also contractive. }

\begin{lemma}
	If periodic behavior occurs across two adjacent subsets, 
	then this must be a contraction. 
	\label{cyclicLemma}
\end{lemma}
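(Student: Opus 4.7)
The plan is to establish the lemma by passing to the second iterate $G^2 = G \circ G$ and reducing the claim to a two-fold application of Lemma \ref{subsetLemma}. By ``periodic behavior across two adjacent subsets $S_j$ and $S_{j-1}$'' I understand a regime in which the iterates of $G$ alternate between $S_j$ and $S_{j-1}$; in that regime $G^2$ is a self-mapping of (a portion of) $S_j$, and the composition of two contractions is itself a contraction.

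Concretely, I would fix two nearby values $\lambda, \lambda' \in S_j$ whose images $G(\lambda), G(\lambda')$ both fall into $S_{j-1}$ under the oscillatory dynamics, and whose second iterates $G^2(\lambda), G^2(\lambda')$ return to $S_j$. Lemma \ref{subsetLemma} applied within $S_j$ yields a constant $c_j < 1$ with
\[
|G(\lambda) - G(\lambda')| \leq c_j \, |\lambda - \lambda'|,
\]
and applied within $S_{j-1}$ to the pair $\bigl(G(\lambda), G(\lambda')\bigr)$ yields a constant $c_{j-1} < 1$ with
\[
|G^2(\lambda) - G^2(\lambda')| \leq c_{j-1} \, |G(\lambda) - G(\lambda')|.
\]
Chaining these two inequalities gives $|G^2(\lambda) - G^2(\lambda')| \leq c_j c_{j-1} \, |\lambda - \lambda'|$, so $G^2$ is a contraction on the relevant portion of $S_j$, with rate strictly smaller than either of $c_j$ or $c_{j-1}$. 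By symmetry the same conclusion holds for the restriction of $G^2$ to the portion of $S_{j-1}$ that maps into $S_j$.

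The main obstacle I expect is ensuring that the pairs $(\lambda, \lambda')$ and $(G(\lambda), G(\lambda'))$ each jointly lie in a single open subset, since Lemma \ref{subsetLemma} only guarantees contractivity when both of its arguments share an $S_i$. For a genuine period-two orbit this holds tautologically, and by openness of the subsets it extends to an entire neighborhood of the orbit; pairs whose images straddle the boundary between $S_j$ and $S_{j-1}$ are excluded by the periodic-behavior hypothesis itself. A secondary technicality is that the step-size $\epsilon$ must be chosen small enough to serve simultaneously as a contraction parameter in both $S_j$ and $S_{j-1}$, so that both $c_j$ and $c_{j-1}$ lie strictly below one and hence $c_j c_{j-1} < 1$; taking the smaller of the two admissible step sizes within the two subsets resolves this.
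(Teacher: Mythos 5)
Your argument is internally coherent, but it proves a different statement from the one the paper needs, and it misses the key mechanism of the paper's proof. Read in context, the lemma is about the \emph{one-step} map $G$ evaluated at a pair of points lying in the two \emph{different} adjacent subsets: the paper shows that for any $\lambda_1 \in S_1$ and $\lambda_2 \in S_2$ (with $S_1$ above $S_2$), one has $\frac{\partial C_t}{\partial \lambda_1} - \frac{\partial C_t}{\partial \lambda_2} > 0$, hence $|G(\lambda_1) - G(\lambda_2)| < |\lambda_1 - \lambda_2|$ for suitable $\epsilon$. This is precisely the case that Lemma \ref{subsetLemma} does \emph{not} cover (when the two arguments have different active sets, the term $A_2$ in that proof no longer vanishes), and it is the estimate that matters: applied to consecutive iterates $\lambda_{t-1} \in S_2$, $\lambda_t \in S_1$ it shows the oscillation amplitude $|\lambda_{t+1} - \lambda_t|$ of the periodic orbit itself is shrinking, and it rules out an exact period-two cycle (such a cycle would satisfy $|G(\lambda^*) - G(\mu^*)| = |\lambda^* - \mu^*|$, contradicting strict contraction across the pair). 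The idea you are missing is that the periodicity hypothesis pins down the \emph{sign} of the gradient in each subset: mapping upward out of $S_2$ forces $\frac{\partial C_t}{\partial \lambda} < 0$ there, and mapping downward out of $S_1$ forces $\frac{\partial C_t}{\partial \lambda} > 0$ there, so the difference of gradients across the two subsets is positive without any appeal to Lemma \ref{subsetLemma}.

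Your reduction to $G^2$ only ever invokes the within-subset contraction, so it can only compare two initial conditions that start in the \emph{same} subset; it says nothing about the distance between a point of $S_1$ and a point of $S_2$ under one application of $G$. Consequently your conclusion is compatible with the orbit converging to a stable period-two cycle of fixed, nonzero amplitude (the even and odd subsequences each converge, but possibly to distinct limits in $S_j$ and $S_{j-1}$) --- which is exactly the pathological oscillatory behavior the lemma is meant to exclude, or at least to show is damped. There is also a secondary looseness you inherit from Lemma \ref{subsetLemma}: its proof only establishes $\frac{\partial g}{\partial \lambda} \geq 0$, i.e.\ non-expansiveness, so uniform constants $c_j, c_{j-1} < 1$ are not actually guaranteed, and a product of two non-expansive maps need not be a strict contraction. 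To repair the proof you should work directly with the signs of $\frac{\partial C_t}{\partial \lambda}$ on the two subsets as dictated by the direction of the periodic jumps, rather than composing the within-subset estimates.
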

\begin{proof}
	We consider periodic behavior of the form:
	\begin{equation}  G(\lambda_{t}) \in  \begin{cases}
	S_j       & \quad \mbox{if } t \mbox{ is even}\\
	S_{j-1}  & \quad \mbox{if } t \mbox{ is odd}\\
	\end{cases} 
	\label{cyclicCases}
	\end{equation}
	We consider two subsets which we label $S_1$ and $S_2$. Without loss of generality we assume
	that $S_1 > S_2$ in the sense that $\lambda_1 > \lambda_2$ for all $\lambda_1 \in S_1$ and $\lambda_2 \in S_2$.
	We consider the periodic behavior described in equation (\ref{cyclicCases}).
	
	Therefore, at an odd iteration the 
	gradient update maps from $S_2$ into $S_1$.
	Thus we have $\lambda_{t} = G(\lambda_{t-1}) > \lambda_{t-1}$ by construction.
	This implies that 
	$\frac{\partial C_t}{\partial \lambda_{t-1}} < 0$. 
	Conversely, in every even iteration the gradient update maps from $S_1$ into $S_2$, implying that
	$\lambda_{t} = G(\lambda_{t-1}) < \lambda_{t-1}$.
	This in turn implies that $\frac{\partial C_t}{\partial \lambda_{t-1}} > 0$. 
	
	As a result, we have that for any $\lambda_1 \in S_1$ and $\lambda_2 \in S_2$:
	$$ \frac{\partial C_t}{\partial \lambda_1} - \frac{\partial C_t}{\partial \lambda_2} > 0$$
	indicating that cyclic mapping must be contractions. 
\end{proof}

We note that the aforementioned results also hold 
when either the exact or approximate gradient as well as 
when multiple unseen samples $\{(X_{i}, y_i): i=1,\ldots, T\}$ are
employed (as in the case of mini-batch updates).

{\color{black}
	\subsection{Extension to Generalized Linear Models}
	\label{sec--GLMextension}
	While the preceding sections focused on 
	linear regression, we now extend the proposed framework to 
	a wider class of %Generalized Linear Regression 
	GLM models. 
	As such, we assume that observations $y_t$ follow an exponential
	family distribution such that 
	$\mathbb{E}[y_t] = \mu_t$
	and $ \mbox{Var}(y_t)=V_t$. 
	In the context of GLMs, it is assumed that a
	(potentially non-linear) link function is employed to relate
	the mean, $\mu_t$, to a linear combination of 
	predictors:
	\begin{equation}
	\eta_t = g( \mu_t ) = X_t^T \beta_{t-1}.
	\end{equation}
	% This is a flexible framework which encapsulates many well-studied 
	% problems 
	% A special case of the 
	We note that when $y_t$ is assumed to follow a Gaussian distribution 
	we recover linear regression as described in Section \ref{sec--Framework}. 
	Conversely, if $y_t$ follows a Binomial distribution we obtain 
	streaming logistic regression. 
	The log-likelihood of an observed response, $y_t$, can be expressed as \citep{mccullagh1989generalized}:
	\begin{equation}
	\label{GLMloglike}
	l(y_t; \theta_t) = \frac{y_t \theta_t - b(\theta_t)}{a(\phi)} + c(y_t, \phi),
	\end{equation}
	where $a(\cdot), b(\cdot)$ and $c(\cdot)$ are functions which vary according to the distribution of the response
	and $\theta_t = \theta(\beta_t)$ is the corresponding canonical parameter.
	% We note that 
	Throughout this work it is assumed that 
	the dispersion parameter, $\phi$, is known and fixed.
	
	Analogously to equation (\ref{ConvexObjectiveLasso}), 
	% For a given choice of regularization parameter, $\lambda$, 
	%the choice of
	we estimate $\ell_1$ regularized 
	regression coefficients by minimizing the re-weighted negative log-likelihood objective:
	\begin{equation}
	\label{l1Objective}
	L_t(\beta, \lambda) =  - \sum_{i=1}^t w_i \left [ y_i ~\theta (\beta_i) - b \{ \theta (\beta_i) \}  \right ] + \lambda || \beta ||_1,
	\end{equation}
	where $w_i$ are weights as before. In the remainder of this manuscript we
	focus on two popular cases, detailed below, but we note that 
	the proposed framework can be employed in a much wider range of settings. 
	% we write $\theta(\beta_i)$ to denote the natural parameter for the $i$th observation and 
	% $w_i > 0 $ are weights indicating the importance given to past observations. Typically, 
	% $w_i$ decay monotonically  in a manner which is proportional to the chronological proximity of the 
	% $i$th observation. %\footnote{note that under the assumption of stationarity, $w_i$ can be ignored}. 
	% For example, weights $w_i$ may be tuned using a fixed forgetting factor or a sliding window. 

	\begin{case}
		Normal linear regression. 
		% \noindent 
		\normalfont In the case of linear regression we have that $g(\cdot)$ is the identity
		such that $\theta_{t+1} = X_{t+1}^T \hat \beta_t (\lambda_t)$
		and $C_{t+1}$ is defined as in Section \ref{sec--Framework}.
		% and $b(\theta_t) = \frac{1}{2} \theta_t^2$ 
		% \citep{mccullagh1989generalized}. 
		% Further:
		% $$C_{t+1} = C(X_{t+1}, y_{t+1}) = -y_{t+1} X_{t+1} + X_{t+1}^T X_{t+1} \hat \beta_t (\lambda_t)$$
		% which is proportional to the update in Section \ref{sec--Framework}. 
	\end{case}
	
	\begin{case}
		Logistic regression. 
		% \noindent 
		\normalfont In this case we have that $g(\cdot)$ is the logistic function. As before
		$\theta_{t+1} = X_{t+1}^T \hat \beta_t (\lambda_t)$ and the negative log-likelihood is defined as:
		$$C_{t+1} = C(X_{t+1}, y_{t+1}) = -y_{t+1} X_{t+1} \hat \beta_t (\lambda_t) + \mbox{log} \left ( 1 + e^{X_{t+1}^T \hat \beta_t (\lambda_t)} \right ). $$
	\end{case}
	
	\begin{prop} 
		\label{prop2} [Adapted from \citep{park2007l1}]
		In the context of $\ell_1$ penalized GLM models, the derivative $\frac{\partial \hat \beta_t (\lambda )}{\partial \lambda_t}$ is also available in closed form as follows:
		\begin{equation}
		\frac{\partial \hat \beta_t (\lambda )}{\partial \lambda_t} = -\left ( X_{1:t}^T W X_{1:t}  \right )^{-1} \mbox{ \normalfont sign} (\hat \beta_t (\lambda))
		\end{equation}
		where $W$ is a diagonal matrix with entries $w_i V_i^{-1} \left ( \frac{\partial \mu_i }{\partial \eta_i}\right )$. 
	\end{prop}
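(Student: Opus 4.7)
The plan is to mirror the proof of Proposition \ref{Prop1} by applying implicit differentiation to the subgradient optimality condition of the penalized GLM objective in equation (\ref{l1Objective}). First I would write down the stationarity condition: since $L_t(\beta,\lambda)$ is convex but non-smooth, any minimizer $\hat\beta_t(\lambda)$ must satisfy
\begin{equation*}
\nabla_\beta L_t(\beta,\lambda)\big|_{\beta=\hat\beta_t(\lambda)} = -\sum_{i=1}^t w_i \left[ y_i - \mu_i(\beta) \right] \frac{\partial \theta(\beta_i)}{\partial \beta} + \lambda\,\mbox{sign}(\hat\beta_t(\lambda)) \ni 0,
\end{equation*}
where I use that $\partial b(\theta)/\partial\theta = \mu$. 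The chain rule through $\theta \to \mu \to \eta = X^T \beta$ together with $\partial\theta/\partial\mu = V^{-1}$ gives
\begin{equation*}
\frac{\partial \theta(\beta_i)}{\partial \beta} = V_i^{-1}\left(\frac{\partial \mu_i}{\partial \eta_i}\right) X_i,
\end{equation*}
so the smooth component of the gradient rewrites as $-X_{1:t}^T \tilde W (y_{1:t} - \mu)$ for a diagonal matrix $\tilde W$ whose entries are precisely $w_i V_i^{-1}(\partial \mu_i/\partial \eta_i)$.

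Next, exactly as in the linear case, I would differentiate the stationarity condition with respect to $\lambda$ on the (open) region where the active set $\mathcal{A}(\lambda)$ and the signs of the active coefficients are locally constant, so that $\mbox{sign}(\hat\beta_t(\lambda))$ contributes nothing upon differentiation. Differentiating the smooth component with respect to $\lambda$ brings out the Hessian of the negative log-likelihood acting on $\partial \hat\beta_t(\lambda)/\partial \lambda$. Computing this Hessian (or, equivalently, invoking the Fisher information for the canonical link for which observed and expected information agree) yields the reweighted cross-product $X_{1:t}^T W X_{1:t}$ with $W$ as defined in the statement. The implicit differentiation identity then reads
\begin{equation*}
\left(X_{1:t}^T W X_{1:t}\right) \frac{\partial \hat\beta_t(\lambda)}{\partial \lambda} + \mbox{sign}(\hat\beta_t(\lambda)) = 0,
\end{equation*}
and rearranging produces the claimed closed form.

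I expect the main obstacle to be the same technical point encountered in Proposition \ref{Prop1}: the $\ell_1$ term makes the objective non-smooth, so implicit differentiation is only valid on the relative interior of regions where $\mathcal{A}(\lambda)$ is constant. I would handle this by appealing to Remark \ref{LassoProperty1} (which carries over to the GLM setting, since the regularization path of $\ell_1$-penalized GLMs is still piecewise smooth with a finite number of breakpoints, as in \cite{park2007l1}), and by restricting the argument to the open subsets produced by Lemma \ref{subsetLemma}. A secondary subtlety is that for non-canonical links one must decide whether to use the observed Hessian or the Fisher information; adopting the Fisher information (as is standard in IRLS) produces the symmetric positive semi-definite form $X_{1:t}^T W X_{1:t}$ stated in the proposition, and this is the natural analogue of the linear-regression case. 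The remainder of the derivation is a routine application of the chain rule and matches the structure of Proposition \ref{Prop1} line by line.
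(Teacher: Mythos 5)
Your proposal follows essentially the same route as the paper's Appendix A proof: implicit differentiation of the subgradient stationarity condition for the penalized GLM objective, with the chain rule through $\bm{\mu} \to \bm{\eta} \to \beta$ producing $X_{1:t}^T W X_{1:t}\,\frac{\partial \hat\beta_t(\lambda)}{\partial\lambda} + \mbox{sign}(\hat\beta_t(\lambda)) = 0$ and hence the claimed closed form. Your additional care in restricting to regions where the active set and signs are locally constant, and in flagging the observed-versus-expected information choice (the paper silently holds the weight matrix fixed when differentiating, i.e., Fisher scoring), only makes explicit what the paper's derivation leaves implicit.
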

	
	\begin{proof}
		The proof is closely related to that of Proposition \ref{Prop1}.
		A full derivation is 
		provided in Appendix \ref{AppendixA}.
		%	A full proof is provided in the Appendix. 
	\end{proof}

	%\newpage 
	
	\begin{remark}
		We note that applying the RAP framework in the context of GLMs 
		requires only minor modifications from the procedure  
		detailed Algorithm \ref{alg:AdaptLam}. 
		%	These changes only
		%	affect the weight matrix, $W$, and the 
		%	estimation of regression coefficients, $\hat \beta_{t+1} (\lambda_{t_+1})$.	In the 
		%	case of the latter, a wide range of efficient coordinate descent algorithms 
		%	are available \citep{friedman2010regularization}. 
		%	is the same as before. Only minor changes are required in Algorithm \ref{alg:AdaptLam}. 
		%	Update the regression coefficients in line 9 can also be done efficiently. 
	\end{remark}
	
	\begin{remark}
		Unfortunately, the paths of regression coefficients within regularized GLM models are not piece-wise linear \citep{park2007l1}. As such, 
		the results of Section \ref{sec--FPconv} cannot easily be extended to 
		include regularized GLM models.
		%	we cannot extend the results of Section \ref{sec--FPconv} to the setting of regularized GLM models. 
	\end{remark}
	
}
\section{Empirical results}
\label{sec--Sims}

\label{sec--SimStudy}

In this section we empirically demonstrate the capabilities of the 
proposed framework via a series of simulations. 
%In this section we look to compliment the results presented in Section \ref{sec--diabetes}
%with a more extensive set of simulations.
We begin by considering the performance of the RAP algorithm in the context of 
stationary data. This simulation serves to demonstrate that the 
proposed method is capable of accurately tracking the regularization parameter. 
We then study the performance of RAP algorithm in the context of non-stationary data.
Throughout this simulation study the RAP algorithm is benchmarked against two offline methodologies:
cross-validation and SMBO. 
In the context of SMBO methods, we study the performance against Bayesian optimization methods. Here a 
Gaussian process with a square exponential kernel was employed as a surrogate model together with 
the expected improvement acquisition function.

%We begin by outlining simulation settings as well as the 
%associated performance metrics in Sections \ref{sec--SimSettings} and 
%\ref{sec--PerfMetrics} respectively. 
%The results for the stationary and non-stationary data simulations are 
%subsequently presented in Section \ref{sim--StationaryDataSim} and \ref{sim--NonStationaryDataSim}. 

\subsection{Simulation settings}
\label{sec--SimSettings}

In order to thoroughly test the performance of the RAP algorithm, 
we look to generate synthetic data were 
we are able to control both the underlying structure as well as the dimensionality of the
data. 
In this work, 
the covariates $X_t$ were 
generated 
according to a multivariate Gaussian distribution with a block covariance structure. This introduced
significant correlations across covariates, thereby increasing the difficultly of the 
regression task. 
Formally, the data simulation process followed that described by \cite{mcwilliams2014loco}. 
This involved sampling each covariate as follows:
$$ X_t \sim \mathcal{N}(0, \Sigma),$$
where $\Sigma \in \mathbb{R}^{p \times p}$ is a block diagonal matrix consisting of five equally sized blocks. 
Within each block, the off-diagonal entries were fixed at $0.8$, while the diagonal entries were fixed to be one. 
Having generated covariates, $X_t$, a sparse vector of regression coefficients, $\beta$, was simulated.
This involved randomly selecting a proportion, $\rho$, %fixed to be $20\%$, 
of coefficients and randomly generating their values according to a standard Gaussian distribution. 
All remaining coefficients were set to zero. 
{\color{black}
	Given simulated covariates, $X_t$, and a vector of sparse regression coefficients 
	$\beta$,  the response was simulated according to an exponential family distribution
	with mean parameter $\mu_t = g^{-1} (X_t \beta )$. In this manner, data was 
	generated from both a Gaussian as well as Binomial distributions. 
	In case of the former, we therefore have that $ y_t \sim \mathcal{N}( X_t^T \beta, 1)$,
	while in the case of logistic regression we have 
	that $y_t$ follows a Bernoulli distribution with 
	mean $\sigma (X_t^T \beta )$ where 
	$\sigma(\cdot)$ denotes the logistic function. 
}

In this manner, it is possible to generate piecewise stationary data, $\{ (y_t, X_t): t=1, \ldots, T\}$.
When studying the performance of the RAP algorithm in the context of stationary data, it sufficed to simulate
one such dataset. In order to quantify performance in the context of non-stationary data, we concatenate 
multiple piece-wise stationary datasets. This results in datasets with abrupt changes. 
We note that in the non-stationary setting the block structure was randomly permuted at each iteration 
%in 
%order 
to avoid covariates sharing the same set of correlated variables. 

\subsection{Performance metrics}
\label{sec--PerfMetrics}

In order to assess the performance of the RAP algorithm we consider various  metrics. 
In the context of stationary data, our primary objective is to demonstrate that the 
proposed method is capable of tracking the regularization parameter when benchmarked against 
traditional methods such as cross-validation. As a result, 
we consider the difference in $\ell_1$ norms of the 
regression model estimated by each 
algorithm. This is defined as:
\begin{equation}
\Delta = ||\beta(\lambda^{CV})||_1 - || \beta(\lambda^{RAP})||_1,
\label{l1DiffEq}
\end{equation}
where we write $\lambda^{CV}$ and $\lambda^{RAP}$ to denote the regularization parameters
selected by cross-validation and RAP algorithms respectively.
We choose to employ the $\ell_1$ norm (as opposed to directly considering the sparsity parameter, $\lambda$)
as there is a one-to-one relationship between $\lambda$ and the $\ell_1$ norm. This serves to bypass any 
potential issues arising from scaling or other idiosyncrasies.

{\color{black}  
	In the context of non-stationary data we are interested in two additional metrics. 
	The first corresponds to the negative log-likelihood 
	of each new unseen observations, $C_{t+1}$, initially defined 
	in equation (\ref{adapCost_LM}).
}
Secondly, we also consider the correct
recovery of the sparse support of $\beta_t$. In this context, we
treat the recovery of the support of $\beta_t$ as a binary classification problem
and quantify the performance using the $F$ score; defined as the 
harmonic mean between the precision and recall of a 
classification algorithm. 

\subsection{Stationary data}
\label{sim--StationaryDataSim}

{\color{black}
	We begin by demonstrating that the RAP framework is capable of accurately 
	tracking the regularization parameter in the context of stationary data. 	
}
In particular, we study the performance of the RAP algorithm as
the dimensionality of regression coefficients, $p$, increases. 

Data was generated as described in Section \ref{sec--SimSettings} and the dimensionality
of the covariates, $X_t$, was varied from $p=10$ through to $p=100$. 
For each value of $p$, 
datasets consisting of $n=300$ observations where randomly generated. 
The regularization parameter was first estimated using $K=10$ fold cross-validation.
The RAP algorithm was subsequently employed and the 
difference in $\ell_1$ norm, defined in equation (\ref{l1DiffEq}), was then computed. 
In case the of the RAP algorithm, each observation was studied once in a streaming fashion. The initial choice for the regularization parameter, $\lambda_0$, was 
randomly sampled from a uniform distribution, $\mathcal{U}[0,1]$. 
{\color{black}
	Both normal linear  and logistic regression were studied in this manner.}

\begin{figure}[t]
	\includegraphics[width=0.8\textwidth]{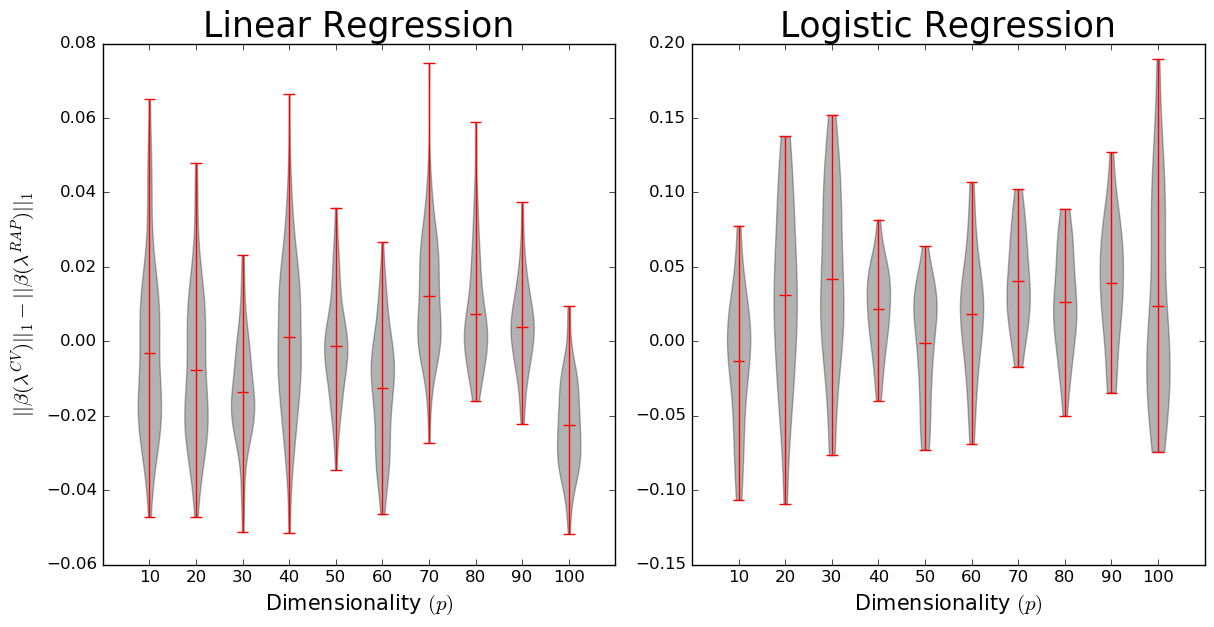}
	\centering
	\caption{Violin plots visualizing the difference in selected regularization parameters
		as a function of the {\color{black} dimensionality, $p$, for linear (left) and 
			logistic regression (right). 
			We note that the difference in estimated $\ell_1$ norms is 
			both small and centered around the origin, indicating the absence of large
			systematic bias. Note the difference in y-axis across panels. 
		}
		% It is reassuring to note that the difference is both small in magnitude and centered around the origin, indicating 
		% the absence of a large systematic bias.
	}
	\label{Sim1bFig}
\end{figure}

{\color{black}
	The difference in selected regularization parameters over $N=500$
	simulations is visualized in Figure \ref{Sim1bFig}. 
	It is reassuring to note that, 
	for both linear and logistic regression,
	the differences are both small in magnitude as well as centered around the 
	origin. This serves to indicate the absence of a large systematic bias. 
	%The figure also does not show evidence of any systematic change in the bias as the dimensionality increases. 
	However, 
	%While this simulation provides evidence that the RAP algorithm was capable of 
	%correctly recovering the regularization parameter in the context of  
	%both linear and logistic regression, 
	we note that there is  higher 
	variance in the context of logistic regression. 
}

\subsection{Non-stationary data}
\label{sim--NonStationaryDataSim}

While Section \ref{sim--StationaryDataSim} provided empirical evidence demonstrating that the %to suggest the 
RAP framework can be effectively employed to track regularization parameters in a stationary
setting, we are ultimately interested in 
streaming, non-stationary datasets. 
As a result, in this simulation we study 
the performance of the proposed framework in the context of non-stationary data. 
{\color{black}
	As in Section \ref{sim--StationaryDataSim} we study the properties of the RAP algorithm in the context of
	linear and logistic regression.}

While 
there are a multitude of methods through which to simulate 
non-stationary data, in this simulation study we chose to generate data with 
piece-wise stationary covariance structure. 
As a result, 
the underlying covariance alternated between two regimes: 
a sparse regime where the response was driven by a reduced subset of covariates and a dense 
regime where the converse was true.
Thus, pairs $(y_t, X_t)$ of response and predictors were simulated in a 
piece-wise stationary regimes. The dimensionality of the covariates was fixed at $p=20$, implying
that $X_t \in \mathbb{R}^{20}$. Changes occurred abruptly every 100 observations and two change-points were
considered, resulting in $300$ observations in total. 

Covariates, $X_t$, were simulated as described in Section \ref{sec--SimSettings}
within two alternating regimes; dense and sparse. 
The block-covariance structure remained fix within each regime (i.e., for 100 observations). 
Within the dense regime, 
a proportion $\rho_1= 0.8$ of 
regression coefficients were randomly selected and their values sampled from a standard Gaussian distribution. 
All remaining coefficients were set to zero. 
Similarly, in the case of the sparse regime, $\rho_2 = 0.2$ regression coefficients were randomly selected with remaining
coefficients set to zero. The regression coefficients remained fixed within each regime.

In order to benchmark the performance of the proposed RAP framework, 
streaming penalized Lasso models were also estimated using a fixed and stepwise constant sparsity parameters. 
As a result, the RAP algorithm was 
benchmarked against three distinct offline methods
for selecting the regularization parameter. 
In the case of a fixed sparsity parameter, 
$K=10$ fold cross-validation as well as Bayesian optimization were employed.
Finally, cross-validation was also employed to learn a stepwise constant regularization parameter.
This was achieved by performing cross-validation for the data within each regime. 
For each of these methods, 
their offline nature dictated that the 
entire dataset 
should be analyzed simultaneously (as opposed to in a streaming fashion by the RAP algorithm). 
As such, they serve to provide a benchmark but would infeasible in the context of streaming data.

Results for $N=500$ simulations are shown in Figure \ref{Sim2Fig}.
{\color{black}
	The 
	%In particular, we plot the 
	estimated time-varying regularization 
	parameter for both the linear and logistic regression models is shown on the left panels. 
	These results provide evidence that the RAP algorithm is able to 
	reliably track the piece-wise constant regularization parameters selected 
	by cross-validation. 
}
As expected, there is some lag directly after each change occurs, however, the 
estimated regression parameters is able to adapt thereafter. 
{\color{black}
	Figure \ref{Sim2Fig} also shows the mean negative log-likelihood over unseen samples, $C_{t+1}$.
}
We note there are abrupt spikes every 100 observations, corresponding to the 
abrupt changes in the underlying dependence structure. 
Detailed results are provided in Table \ref{ResultsTab}. We note that the proposed framework is able to
outperform 
the alternative offline approaches. In the case of the offline cross-validation and SMBO, 
this is to be expected as a fixed choice of regularization parameter is misspecified. 

\begin{figure}[!t]
	\includegraphics[width=0.8\textwidth]{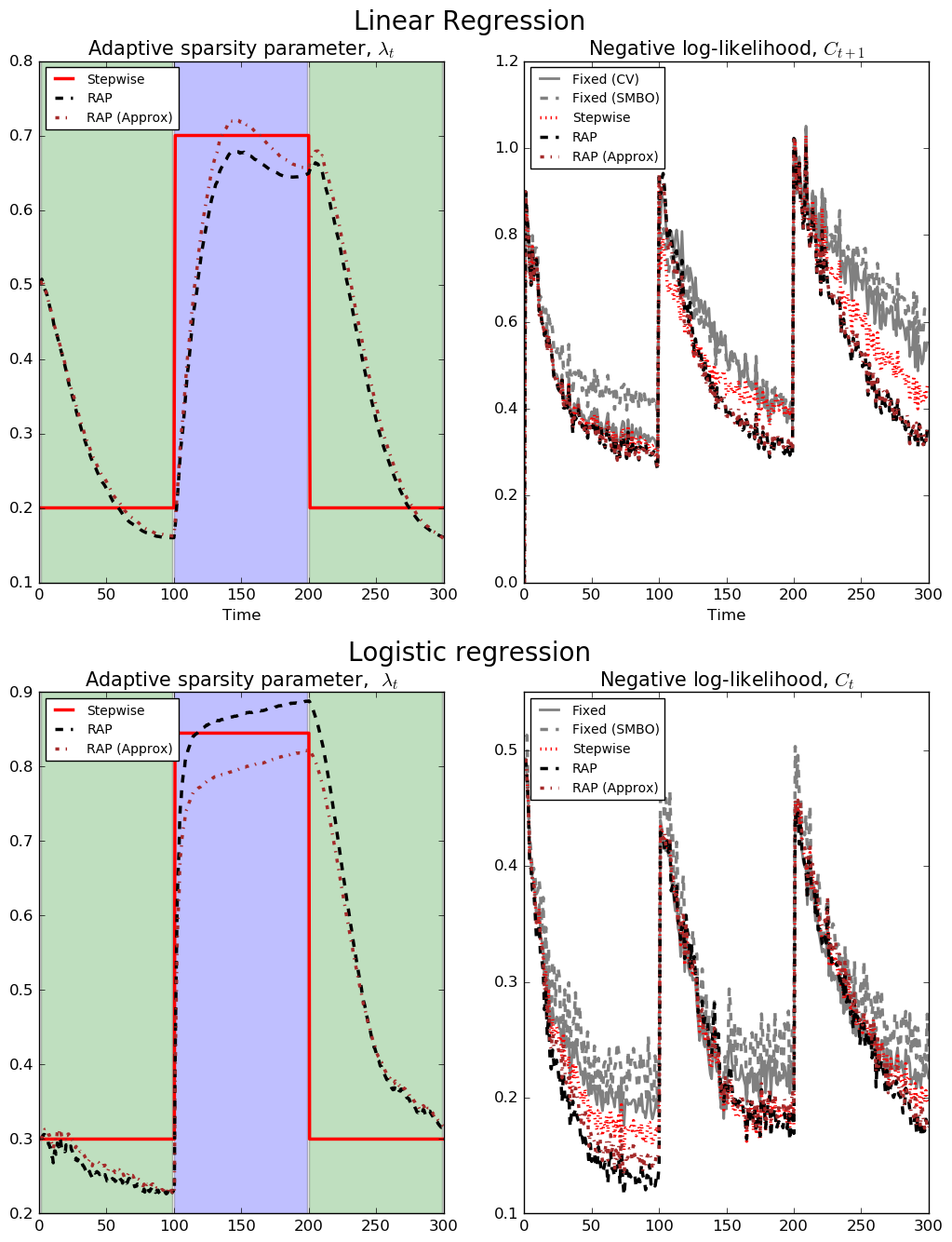}
	\centering
	\caption{
		{\color{black}
			Simulation results when estimating regularized streaming
			linear and logistic regression models. 
			Results for linear and logistic regression are shown across the 
			first and second rows respectively. 
			The left panels plot the 
			mean regularization parameter as estimated by the 
			RAP algorithm as well as the optimal piece-wise constant value selected via 
			cross-validation.
			The right panels plot the 
			% The second panel plots the 
			mean negative log-likelihood, $C_{t+1}$, over time. We note that the 
			RAP algorithms outperform the offline alternatives.
		}
	} 
	\label{Sim2Fig}
\end{figure}

\begin{table}[h]
	\centering
	\caption{
		Detailed results for simulation involving non-stationary data over $N=500$ 
		independent iterations.  We report the 
		%     	Detailed results 
		%     consisting of the 
		mean negative log-likelihood, $\bar C_t$, as well as the mean
		$F$-score, $\bar F_t$. 
		%     are reported % for simulations described in Section \ref{sec--Sim2}.
		%     over $N=500$ simulations. 
		Standard errors are provided in brackets.
		%      This table contains the mean negative log-likelihood, $\bar C_t$, as well as 
	}     
	\label{ResultsTab}
	\begin{tabular}{|c|c|c|c|c|}
		\hline
		&
		\multicolumn{2}{|c|}{\textbf{Linear regression}} &
		\multicolumn{2}{|c|}{\textbf{Logistic regression}} \\ \hline
		%       \multicolumn{1}{c|}{} \\
		% \setlength\extrarowheight{5pt}
		\\[-1em]
		{ \textbf{Algorithm}} & {\color{white}\large A}\hspace{-.4cm}  $\bar C_t$  &  $\bar F_t$ &  $\bar C_t$ &  $\bar F_t$  \\%[2.5pt]
		\hline
		Fixed (CV) & 0.58 (0.05) & 0.49 (0.05) & 0.25 (0.07) & 0.49 (0.04) \\
		\hline
		Fixed (SMBO) & 0.63 (0.05) & 0.50 (0.07) & 0.26 (0.08) & 0.49 (0.05)   \\
		\hline
		Stepwise & 0.51 (0.04) & 0.56 (0.04) & \textbf{0.21} (0.06)  & \textbf{0.53} (0.04) \\
		\hline
		RAP & \textbf{0.47} (0.04) & \textbf{0.64} (0.06) & \textbf{0.19} (0.04) & \textbf{0.58} (0.03) \\
		\hline
		RAP (Approx) & \textbf{0.48} (0.05) & \textbf{0.63} (0.07)  & \textbf{0.20} (0.04) & \textbf{0.55} (0.05) \\
		\hline
	\end{tabular}
\end{table}

\section{Application to fMRI data}

In this section we present an application of the RAP algorithm to 
task-based functional MRI (fMRI) data. 
This data corresponds to time-series measurements of blood oxygenation, a proxy for  neuronal activity, 
taken across a set 
of spatially remote brain regions. % \citep{lindquist2008statistical}. 
Our objective in this work is to quantify pairwise statistical dependencies across 
brain regions, 
% spatially
% remote brain regions, 
typically referred to as functional connectivity within the neuroimaging
literature \citep{smith2011network}. 

While traditional analysis of functional connectivity was rooted on the assumption of stationarity,
there is growing evidence to suggest this is not the case \citep{hutchison2013dynamic}.
This particularly true in the context of task-based fMRI studies. 
Several methodologies have been proposed to address the non-stationary nature of 
fMRI data \citep{monti2014estimating}, many of 
which are premised on the use of penalized regression models such as 
those studied in this work. 
While such methods have made important progress in the study of 
non-stationary connectivity networks,
they have typically employed fixed regularization parameters.
This is difficult to justify in the context 
of non-stationary data and 
plausible biological justifications are not readily available.
The RAP algorithm is therefore
ideally suited to both accurately estimating non-stationary
connectivity structure as well as providing insight regarding whether the assumption of a 
fixed sparsity parameter is reasonable.

% We briefly review 

\subsection{Estimating connectivity via Lasso regressions}
\label{sec--neighbourhoodSel}
Estimating functional connectivity networks is fundamentally a statistical challenge.
A functional relationship is said to 
exist across two spatially remote brain regions 
if their corresponding time-series share some statistical dependence. 
While this can be quantified in a variety of ways, a 
popular approach is the use of Lasso regression models to infer the conditional independence structure 
of a particular node. In such an approach, the time-series of a given node 
is regressed against the time-series of all remaining nodes.
%The sparse regression coefficients obtained
A functional relationship is subsequently inferred between the target node and 
all remaining nodes associated with a non-zero regression coefficient.
The connectivity structure across all nodes can then be inferred via a 
neighborhood selection approach
\citep{meinshausen2006high}.
The proposed RAP framework can directly be incorporated into such a model, resulting 
in time-varying conditional dependence structure where the underlying sparsity parameter is also inferred. 

% \newpage 
% 
% 
% , which inherently 
% corresponds to estimating a network based on a sparse estimate of the inverse covariance matrix.
% Such an approach, termed neighborhood selection, was first introduced by \cite{meinshausen2006high}
% and involves iteratively %fitting a Lasso regression over each 
% learning the conditional dependence structure for each node via the Lasso. 

\subsection{HCP Emotion Task Data}
Emotion task data from the Human Connectome
Project (HCP) %\citep{van2013wu} 
was studied with 20 subjects selected at random.
During the task 
participants were presented with 
blocks of trials that either required them to decide which of two 
faces presented on the bottom of the screen match the face 
at the top of the screen, or which of two shapes 
presented at the bottom of the screen match the shape at the top of the screen.  
The faces had either an angry or fearful expression while the shapes 
represented the emotionally neutral condition.
%Preprocessing involved regressing out motion parameters. 
%Preprocessing involved regression of Friston’s 24 motion parameters from the fMRI data.
Twenty regions were selected from an initial subset of 84 brain regions based on the Desikan-Killiany atlas. 
%
%Sixty-eight cortical and 16 subcortical regions were derived from 
%the Desikan-Killiany atlas and the ASEG atlas,
%respectively. 
%Mean BOLD time series for each of these regions were extracted and further 
%cleaned by regressing out time series sampled from white matter and cerebrospinal fluid. 
%Finally, the extracted time courses were high-pass filtering using a cut-off frequency of $\frac{1}{130}$ Hz.
%Neurosynth, a 
%platform for large-scale automated synthesis of neuroimaging data,% \citep{yarkoni2011large},
%was employed to reduce the number of regions studied. 
%This provided 
%an automatically generated forward inference map based on 790 studies 
%quantifying the activation all regions in emotion studies.
%Twenty regions identified as core emotion hubs were selected.
Data for each subject therefore consisted of
$n=175$ observations across $p=20$ nodes.

\begin{figure}[!t]
	\includegraphics[width=.87\textwidth]{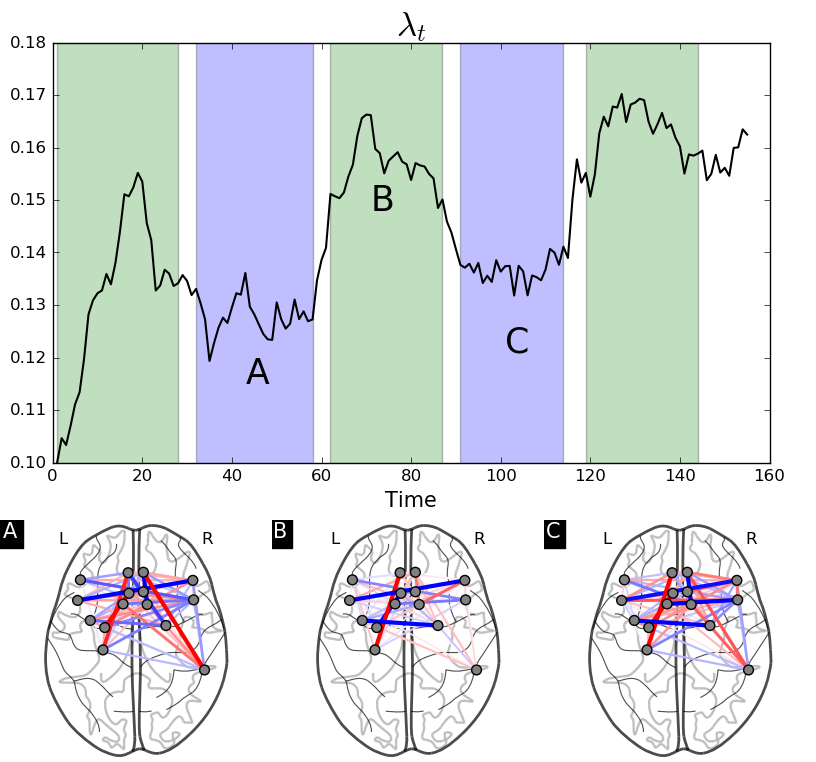}
	\centering
	\caption{\textit{Top}: the mean sparsity parameter is shown as a function of time. The background color indicates 
		the nature of the task at hand (green indicates neutral task while blue indicates the emotion task).
		\textit{Bottom}: estimated networks visualizing the estimated connectivity structure at 
		three distinct points in time. Edge colors indicate the 
		nature of the dependence (blue indicates a positive dependence, red a negative dependence). } 
	\label{AppFig}
\end{figure}

\subsection{Results}

Data for each subject was analyzed independently where the time-varying estimates of the 
conditional dependence structure for each node were estimated as 
described in Section \ref{sec--neighbourhoodSel}.
A fixed forgetting factor 
of $r=.95$ was employed throughout with a stepsize parameter $\epsilon=.025$.
The exact gradient was employed when updating the sparsity parameter at each iteration.
%In order to avoid unreliable initial performance of the algorithms a burn-in 
%of twenty observations was employed. 

The mean sparsity parameter over all subjects is shown in  the top panel of 
Figure \ref{AppFig}. 
We observe decreased sparsity parameters for blocks
in which subjects were presented with emotional 
(i.e., angry or fearful) faces (top panel, purple shaded areas)
as compared to blocks in which subjects were shown neutral shapes (top panel, green shaded areas). 
The oscillation in sparsity parameter is highly correlated with task onset.
When inspecting the networks estimated using the time
varying sparsity parameter (bottom panel), we find strong
coupling amongst many of the regions during the emotion processing
blocks (A and C) compared to a clearly sparser
network representation for blocks that require no emotion
processing (i.e., neutral shapes, block B). This is to be expected
as the selected regions are core hubs involved with
emotion processing; therefore explaining the higher network
activity during the emotion task. 
% when compared to
%the neutral task.

\section{Conclusion}

In this work we have presented a  framework through which to 
learn  time-varying regularization parameters in the context of streaming generalized
linear models.
An approximate algorithm is also provided to 
address issues concerning computational efficiency.
%a factor of paramount importance in the context of high-dimensional 
%data. 
%The proposed framework is first derived in the context of streaming lasso regression
%and some of theoretical properties are presented in this setting. An extension to 
%generalized linear models is subsequently presented, with a special focus on streaming 
%regularized logistic regression. 
%
%
We present two simulation studies which demonstrate the capabilities of the RAP
framework.
These simulations show that the proposed  framework
is capable of tracking the regularization parameter both in  a
stationary as well as non-stationary setting.
Finally, we present an application to task-based fMRI data, which is widely accepted to be 
non-stationary \citep{hutchison2013dynamic}.

Future work will involve extending the RAP framework to consider 
alternative regularization schemes.
In particular an $\ell_2$ penalty 
could also be incorporated as the derivative, $\frac{\partial \hat \beta_t (\lambda )}{\partial \lambda}$, 
is  available in closed form.

Finally, the methods presented in this manuscript have been motivated 
by the study of fMRI data in real-time \citep{monti2016real}. Future work will look to 
extend the proposed methodology, 
for example by combining with current approaches which involve
graph embeddings \citep{monti2017decoding} or 
novel applications of real-time fMRI such as those 
described by \cite{lorenz2016automatic}  and 
\cite{lorenz2017neuroadaptive}. Another exciting avenue 
would be to use the proposed methods to understand 
variability in dynamic functional connectivity \citep{monti2015learning}. 
%\citep{monti2016real, monti2017decoding, lorenz2016automatic}.
Furthermore, 
it would also be interesting to consider alternative  
%we may also consider alternative 
applications such as 
cyber-security \citep{gibberd2016time}, gene expression 
data \citep{gibberd2017regularized}  and finance.

%Furthermore, 
%It would also be possible to study %extend the framework to consider 
%a wider range of models, such as support vector machines. 

%\section*{Acknowledgments}
%
%The authors wish to thank Romy Lorenz and Rob Leech for help in processing the data 
%as well as helpful discussions.

\bibliographystyle{plainnat}
\bibliography{sigproc.bib}
\begin{appendices}
	\section{Proof of Proposition \ref{prop2}}
	\label{AppendixA}
	%	\footnotesize
	For a given regularization parameter, $\lambda$, the corresponding 
	vector of estimated regression coefficients, $\hat \beta (\lambda)$,  can be computed
	by minimizing the non-smooth objective, $L_t(\beta, \lambda)$,  provided in equation (\ref{l1Objective}). 
	%As such, t
	The sub-gradient is defined as:
	\begin{eqnarray}
	\nabla_{\beta} L_t (\beta, \lambda) = -X_{1:t}^T W (y_{1:t} - \bm{\mu} ) \frac{\partial \bm{\eta}}{\partial \bm{\mu}} + \lambda ~ \mbox{sign} ( \beta) 
	\end{eqnarray}
	%where we write $W$ to denote a diagonal matrix with entries $w_i  V_i^{-1} \left (\frac{\partial {\mu_i}}{\partial {\eta_i}} \right )^2$. 
	%Similarly,  w
	We write $\bm{\mu} $ to denote the vector of predicted means, $\mu_i = g^{-1}( \eta_i) = g^{-1} ( X_i^T \beta)$ and 
	$\frac{\partial \bm{\eta}}{\partial \bm{\mu}}$ to denote a vector with entries 
	$\frac{\partial {\eta}_i}{\partial {\mu}_i}$.
	Note that in the case of normal linear regression we have that $\mu_i = \eta_i = X_i^T \beta$ and we therefore recover equation (\ref{subGradEq}). 
	
	As in Proposition \ref{Prop1}, we have that for any choice of regularization parameter,
	the sub-gradient evaluated at $\hat \beta_t (\lambda)$ must 
	satisfy:
	$$ \left . \nabla_{\beta} L_t (\beta, \lambda) \right |_{\beta = \hat \beta_t( \lambda) } \ni 0.  $$
	We therefore compute the derivative with respect to $\lambda$ in order to obtain:
	\begin{align}
	\frac{\partial}{ \partial \lambda} \left (  \left . \nabla_{\beta} L_t (\beta, \lambda) \right |_{\beta  = \hat \beta_t( \lambda) } \right ) &= 0 \\
	&= \frac{\partial}{ \partial \lambda} \left (  -X_{1:t}^T W (y_{1:t} - \bm{\mu }) \frac{\partial \bm{\eta}}{\partial \bm{\mu}}  \right ) + \mbox{sign} (\hat \beta_t( \lambda)  )  \\
	&= X_{1:t}^T W \frac{\partial \bm{\mu }}{ \partial \lambda } \frac{\partial \bm \eta}{\partial \bm \mu } + \mbox{sign} (\hat \beta_t( \lambda)  ) \\
	&= X_{1:t}^T W  X_{1:t} \frac{\partial \hat \beta_t( \lambda) }{ \partial \lambda }  + \mbox{sign} ( \hat \beta_t( \lambda)  )
	\label{FinalStep} 
	\end{align}
	where equation (\ref{FinalStep}) follows from the fact that:
	$$ \frac{\partial \bm{\mu }}{ \partial \lambda } = \frac{\partial \bm{\mu }}{ \partial \bm{\eta} }
	\frac{\partial \bm{\eta}}{ \partial \hat \beta_t(\lambda) }
	\frac{ \partial \hat \beta_t(\lambda) }{\partial \lambda }
	= \frac{\partial \bm{\mu }}{\partial \bm{\eta}} X_{1:t} \frac{ \partial \hat \beta_t(\lambda) }{\partial \lambda }.
	$$
	Rearranging equation (\ref{FinalStep}) yields the result.

\end{appendices}

\end{document}